%%%% ijcai22.tex

\typeout{IJCAI--22 Instructions for Authors}

% These are the instructions for authors for IJCAI-22.

\documentclass{article}
\pdfpagewidth=8.5in
\pdfpageheight=11in
% The file ijcai22.sty is NOT the same as previous years'
\usepackage{ijcai22}

% Use the postscript times font!
\usepackage{times}
\usepackage{soul}
\usepackage{url}
\usepackage[hidelinks]{hyperref}
\usepackage[utf8]{inputenc}
\usepackage[small]{caption}
\usepackage{graphicx}
\usepackage{amsmath}
\usepackage{amsthm}
\usepackage{booktabs}
\usepackage{algorithm}
\usepackage{algorithmic}
\urlstyle{same}

% the following package is optional:
%\usepackage{latexsym}

% See https://www.overleaf.com/learn/latex/theorems_and_proofs
% for a nice explanation of how to define new theorems, but keep
% in mind that the amsthm package is already included in this
% template and that you must *not* alter the styling.

\newtheorem{theorem}{Theorem}

% Following comment is from ijcai97-submit.tex:
% The preparation of these files was supported by Schlumberger Palo Alto
% Research, AT\&T Bell Laboratories, and Morgan Kaufmann Publishers.
% Shirley Jowell, of Morgan Kaufmann Publishers, and Peter F.
% Patel-Schneider, of AT\&T Bell Laboratories collaborated on their
% preparation.

% These instructions can be modified and used in other conferences as long
% as credit to the authors and supporting agencies is retained, this notice
% is not changed, and further modification or reuse is not restricted.
% Neither Shirley Jowell nor Peter F. Patel-Schneider can be listed as
% contacts for providing assistance without their prior permission.

% To use for other conferences, change references to files and the
% conference appropriate and use other authors, contacts, publishers, and
% organizations.
% Also change the deadline and address for returning papers and the length and
% page charge instructions.
% Put where the files are available in the appropriate places.

% PDF Info Is REQUIRED.
% Please **do not** include Title and Author information
\pdfinfo{
/TemplateVersion (IJCAI.2022.0)
}

\title{Representation and Synthesis of C++ Programs for Generalized Planning}

\author{
Javier Segovia-Aguas$^1$
\and
Yolanda E-Mart\'in$^2$\And
Sergio Jim\'enez$^{3}$
\affiliations
$^1$Universitat Pompeu Fabra\\
$^2$Florida Universit\`aria\\
$^3$VRAIN - Valencian Research Institute for Artificial Intelligence, Universitat Polit\`ecnica de Val\`encia
\emails
javier.segovia@upf.edu,
yescudero@florida-uni.es,
serjice@dsic.upv.es 
}

%%%%%%%%%%%%%%%% Added for this paper %%%%%%%%%%%%%
\usepackage{marvosym}
\usepackage{fontawesome}
\usepackage{amsmath}
\usepackage{amssymb}
\usepackage{amsthm}
\usepackage{arydshln}
\usepackage{listings}
\usepackage{subcaption}
\usepackage{multirow}
\usepackage{multicol}
\usepackage{tikz}
\usetikzlibrary{arrows,automata}
\usetikzlibrary{shapes}
\newcommand{\tup}[1]{{\langle #1 \rangle}}
\usepackage{xcolor}
\usepackage{todonotes}
\usepackage{enumitem}

\usepackage{listings}% http://ctan.org/pkg/listings
\lstset{
  basicstyle=\ttfamily,
  mathescape
}

     % precondition
     % effect
     % effect
     % effect
   % conditional effect
   % true
 % false
\newcommand{\strips}{\textsc{Strips}}     % strips

\newtheorem{definition}{Definition}

%%%%%%%%%%%%%%%%%%%%%%%%%%%%%%%%%%%%%%%%%%%%%%%%%%%

\begin{document}

\maketitle

\begin{abstract}
The paper introduces a novel representation for {\em Generalized Planning} (GP) problems, and their solutions, as C++ programs. Our C++ representation allows to formally proving the termination of generalized plans, and to specifying their {\em asymptotic complexity} w.r.t. the number of world objects. Characterizing the complexity of C++ generalized plans enables the application of a combinatorial search that enumerates the space of possible GP solutions in order of complexity. Experimental results show that our implementation of this approach, which we call {\sc bfgp++}, outperforms the previous {\em GP as heuristic search} approach for the computation of generalized plans represented as compiler-styled programs. Last but not least, the execution of a C++ program on a classical planning instance is a deterministic grounding-free and search-free process, so our C++ representation allows us to automatically validate the computed solutions on large test instances of thousands of objects, where off-the-shelf classical planners get stuck either in the pre-processing or in the search.
\end{abstract}

\section{Introduction} 
{\em Automated planning} has not achieved the level of integration with common programming languages, like {\sc C}, {\sc Java}, or {\sc Python}, that is  achieved by other forms of problem solving such as {\em constraint satisfaction} or {\em operational research}~\cite{schulte2010modeling,prud2014choco,ortools}. An important  reason is the low-level representations traditionally handled in  planning~\cite{geffner2003pddl,rintanen2015impact}. Since the early 70's, \strips{} is the most popular representation language for research in automated planning~\cite{fikes1971strips}. Even today, \strips{} is an essential fragment of PDDL~\cite{haslum2019introduction}, the input language of the {\em International Planning Competition}, and most planners support the \strips{} features. In spite of its popularity, the \strips{} representation is too low-level for many interesting applications~\cite{smith2008anml,rintanen2015impact}; \strips{} limits  representation to Boolean state variables, and Boolean constraints, and focuses computation on plans represented as sequences of ground actions. 

Recent advances in {\em planning as heuristic search}~\cite{frances2017effective,say2017nonlinear,scala2020subgoaling}, and in {\em planning as satisfiability}~\cite{bryce2015smt,scala2016numeric,rintanen2017temporal}, show that handling more expressive problem representations does not necessarily increase planning complexity. On the other hand, advances in {\em generalized planning} (GP) are producing effective algorithmic solution representations for (possibly infinite) sets of planning instances that share common structure ~\cite{schmid2000applying,Winner03distill:learning,Levesque:GPlanning:IJCAI11,srivastava2011new,Zilberstein:Gplanning:icaps11,hu2011generalized,schmid2011inductive,belle2016foundations,illanes2019generalized,jimenez2019review,segovia2019computing,frances2021learning}. 

This paper introduces a novel C++ representation for GP. The contribution of the paper is three-fold:
\begin{enumerate}
    \item {\em Proving termination and characterizing complexity}. Our representation of GP solutions allows to formally proving the termination of generalized plans represented as C++ programs. In addition, our C++ representation reveals the {\em asymptotic complexity} of generalized plans w.r.t the number of world objects. This is a relevant topic beyond GP, since it allows defining formal upper-bounds on the complexity of the  (possibly  infinite)  set  of  instances of a classical planning domain.
    \item{\em Improving the GP as heuristic search approach}. By definition, any generalized plan built by {\sc bfgp++} is terminating. {\sc bfgp++} skips the costly check of infinite executions for candidate solutions and hence, it outperforms the previous {\em GP as heuristic search} approach~\cite{javi:GP:ICAPS21,segovia:GP:ijcai22}. 
    \item {\em  Validation of GP solutions at large instances}. The generalized plans produced by {\sc bfgp++} are compilable with standard programming tools, such as GCC {\em g++}, and efficiently validated in large instances (with thousands of objects), where off-the-shelf planners get stuck either in the pre-processing or in the search.
\end{enumerate}

\section{Preliminaries}

\subsection{Classical Planning}
Following the formalization by~\citeauthor{bonet2021general}~\citeyear{bonet2021general}, we define a {\em classical planning problem} as a pair $P=\tup{\mathcal{D},\mathcal{I}}$, where $\mathcal{D}$ is a first-order {\em planning domain} and $\mathcal{I}$ is the information of the classical planning {\em instance}.  The {\em domain} contains the set of predicate symbols $\Psi$ and the action schemes with preconditions and effects, given by atoms $p(x_1,\ldots, x_k)$, or their negations, where $p\in \Psi$ is a predicate symbol and each $x_i$ is a variable symbol representing an argument of the action scheme. A classical planning {\em instance} is a tuple $\mathcal{I}=\tup{\Omega,I,G}$ where $\Omega$ is the finite set of world objects. Last, $I$ and $G$ respectively are the {\em initial} and {\em goal} configurations of the world objects, and they are defined using ground atoms $p(o_1, \ldots, o_k)$, or their negation.  

The {\em set of states}, $S(P)$, associated with a classical planning problem $P$, are the possible sets of ground atoms. The initial state is $s_0=I$, and the subset of goal states $S_G\subseteq S(P)$, contains all the states $s_g\in S(P)$ s.t. $G\subseteq s_g$. The {\em state graph} associated with a classical planning problem $P$ has as nodes the states $S(P)$. Edges of this graph are defined as follows: for each pair of states $s\in S(P)$ and $s'\in S(P)$, the graph has a directed edge $(s,s')$ iff there is a ground action $a$ that is applicable in $s$ (i.e. whose preconditions hold in $s$) and whose effects transform the state $s$ into $s' = f(s,a)$ . 

A {\em solution} to a classical planning problem $P$ is a sequential plan $\pi = \tup{a_1,\ldots,a_m}$ such that $s_0 = I$,  the ground actions $a_i$ are applicable in states $s_{i-1}$, they produce successors $s_i = f(s_{i-1},a_i)$, and the goal condition holds in the last reached state, i.e. $G \subseteq s_m$.

\subsection{Generalized Planning}
This work builds on top of the inductive formalism for GP, where a GP problem is a set of classical planning instances that belong to the same domain $\mathcal{D}$. In other words, they are all represented with the same predicates and actions schemes, but they may differ in the number of objects, and  the initial/goal configuration of these objects. 

\begin{definition}[GP problem]
\label{def:gp-problem}
A {\em GP problem} is a non-empty set $\mathcal{P} =\{P_1,\ldots,P_T\}$ of $T$ classical planning instances from a given domain $\mathcal{D}$. 
\end{definition}

The aim of GP is to compute algorithmic planning solutions, a.k.a. generalized plans, that work for the given set of planning problems. In this paper we focus on the computation of GP solutions represented as C++ programs.

\begin{definition}[GP solution]
    \label{def:gp-solution}
    A {\em generalized plan} $\Pi$ solves a GP problem $\mathcal{P}=\{P_1,\ldots,P_T\}$ iff, for every classical planning instance $P_t\in \mathcal{P}$, $ 1\leq t\leq T$, the execution of $\Pi$ on $P_t$, denoted as $exec(\Pi,P_t) = \tup{a_1,\ldots,a_m}$, induces a classical plan that solves $P_t$.
\end{definition}

\section{A C++ Representation for  Planning}
We start explaining our C++ representation for propositional classical planning and then we show that it naturally extends to numeric planning and to GP. Our novel representation can actually be implemented with any structured programming language that supports {\tt If} conditionals and {\tt For} loops, as well as {\tt Vectors} (to store arrays that can change in size) and {\tt Map} containers (to store key-value pairs with unique keys); the paper exemplifies our representation with the {\em C++} language, but other common programming languages, such as {\em Python} or {\em Java}, could also be used.

\subsection{Classical planning problems as C++ programs}
We exemplify our C++ representation in the classic {\em blocksworld}~\cite{slaney2001blocks}, that consists of a set of blocks, a table, and a robot hand. The domain defines five first-order predicates, namely {\em clear(?x)}, {\em handempty()}, {\em holding(?x)}, {\em on(?x,?y)}, and {\em ontable(?x)}. A block that has nothing on it is clear, the robot hand can be empty or holding one block, and a block can be on top of another block or on the table. The domain defines four  action schemes, {\em stack(?x,?y)}, {\em unstack(?x,?y)}, {\em pick-up(?x)} and {\em put-down(?x)} for stacking (or unstacking) a block on top of another block and putting down (or picking up) a block onto the table.

{\bf States.}  Each first-order predicate  $p(x_1,\ldots, x_k)\in\Psi$ is represented as a C++ {\em map container} s.t. the key for indexing the map is a $k$-integer vector (where $k$ is the predicate arity). The map stores a Boolean for each of the corresponding ground predicates $p(o_1,\ldots, o_k)$ holding in the current state; 
we follow the closed-world assumption so our C++ state representation stores a maximum of $\sum_{k\geq 0} n_k|\Omega|^k$ Boolean, where $n_k$ is the number of first-order predicates with arity $k$, and $|\Omega|$ is the number of world objects\footnote{ $\sum_{k\geq 0} n_k|\Omega|^k$ is also the  number of propositions that result from grounding a \strips\ classical planning instance.}. Figure~\ref{fig:predicates} shows the C++ declaration of the blocksword predicates, while Figure~\ref{fig:tower} shows our C++ representation of a {\em blocksworld} state, where there are three blocks that are stacked in a single tower. 

\begin{figure}[t]
\centering
\begin{scriptsize}
\begin{lstlisting}[language=c++,numbers=none]
map<vector<int>, bool> pred_clear;
map<vector<int>, bool> pred_handempty;
map<vector<int>, bool> pred_holding;
map<vector<int>, bool> pred_on;
map<vector<int>, bool> pred_ontable;
\end{lstlisting}
\end{scriptsize}
    \caption{\small C++ declaration of the {\em blocksworld} first-order predicates.}
    \label{fig:predicates}
\end{figure}
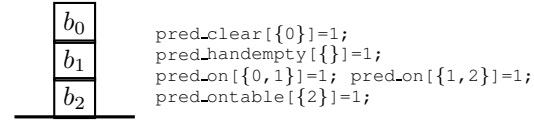
\begin{figure}[t]
\begin{minipage}{0.2\textwidth}
\begin{tikzpicture}[block/.style= {rectangle, draw=black, thick, text centered, node distance=.5cm}]
\node [block] (b1) [] {$b_0$};
\node [block] (b2) [below of= b1] {$b_1$};
\node [block] (b3) [below of=b2] {$b_2$};
\draw [very thick] (-.8,-1.25) -- (0.8,-1.25) node [] {};
\end{tikzpicture}
\end{minipage}
\hspace{-2cm}
\begin{minipage}[t]{0.7\textwidth}
\begin{scriptsize}
\begin{tabular}{l}
{\tt pred\_clear[\{0\}]=1;}\\
{\tt pred\_handempty[\{\}]=1;}\\
{\tt pred\_on[\{0,1\}]=1; pred\_on[\{1,2\}]=1;}\\
{\tt pred\_ontable[\{2\}]=1;}\\
\end{tabular}
\end{scriptsize}
\end{minipage}
    \caption{\small Example of a three-block  state from {\em blocksworld}  (left), and its corresponding  C++ representation (right).}   
    \label{fig:tower}
\end{figure}

{\bf Actions.} Each  action scheme is represented with a C++ Boolean function. The arguments of the function are those of the action scheme, but in our C++ representation they act as {\em indexes} to address the maps that are encoding the state. Formally, an {\em index} $z\in Z$ is a finite domain variable ranging the number of objects i.e., $D_z=[0,|\Omega|)$. We inductively define our C++ representation of an action scheme with the following grammar:

\begin{footnotesize}
\begin{align*}
Action :=\  & if(Condition_z(s))\{Effect(s)\}\ return\ false;\\
Condition_z(s) :=\ & (p(z_1,\ldots, z_k)==0) \&\&\ Condition_z(s)\mid \\
& (p(z_1,\ldots, z_k) ==1) \&\&\ Condition_z(s)\mid \\
& true\\
Effect(s) :=\ & (p(z_1,\ldots, z_k)=0);\ Effect(s)\mid \\
& (p(z_1,\ldots, z_k)=1);\ Effect(s)\mid \\
& return\ true;\\
\end{align*}
\end{footnotesize}
\noindent
where $Condition_z(s)$ is a conjunction of assertions over predicates $p(z_1,\ldots, z_k)$ instantiated with the action arguments, $==$ denotes the equality operator, $\&\&$ is the logical AND operator, $=$ indicates a value assignment, and $;$ denotes the end of an instruction. Likewise {\em Effect(s)} is a conjunction of assignments  representing the positive/negative effects of an action scheme;  $(p(z_1,\ldots, z_k) = 1)$ denotes a {\em positive} effect while $(p(z_1,\ldots, z_k) = 0)$
denotes a {\em negative} effect. 

Figure~\ref{fig:unstack2} shows our C++ representation of the {\tt unstack} action scheme from the blocksworld (Figure~\ref{fig:unstack1}), that compactly represents a set of state transitions, and that applies to any {\em blocksword} instance, no matter the number of blocks. Note that we represent actions as if they were always applicable, but action effects only update the  state iff the action preconditions hold in the current state. This action modeling, common in RL, facilitates the specification of compact algorithm-like solutions, and it preserves the original branching factor (successor states equal to their parents are ignored).

\begin{figure}[h!]
\begin{scriptsize}
\begin{lstlisting}[language=lisp,numbers=none]
(:action unstack
  :parameters (?x ?y)
  :precondition (and (on ?x ?y)(clear ?x)(handempty))
  :effect (and (holding ?x) (clear ?y)
               (not (clear ?x)) (not (handempty))
               (not (on ?x ?y)))))
\end{lstlisting}
\end{scriptsize}
    \caption{\small PDDL representation of the {\tt\small unstack} action scheme.}
    \label{fig:unstack1}
\end{figure}       
\begin{figure}[h!]
\begin{scriptsize}
\begin{lstlisting}[language=c++,numbers=none]
bool act_unstack(int x, int y){
  if(pred_on[{x,y}]==1  && pred_clear[{x}]==1  && pred_handempty[{}]==1){
    pred_holding[{x}] = 1; pred_clear[{y}] = 1;
    pred_clear[{x}] = 0; pred_handempty[{}] = 0;
    pred_on[{x,y}] = 0;
    return true;
  }
  return false;
}
\end{lstlisting}
\end{scriptsize}
    \caption{\small Our C++ representation of the {\tt\small unstack} action scheme.}
    \label{fig:unstack2}
\end{figure}

{\bf Problems.} Our C++ representation of a propositional planning problem is completed with the functions for representing the {\em initial state} and the {\em goals}. These functions are formalized as follows:
\begin{footnotesize}
\begin{align*}
Init :=\ & (p(o_1,\ldots, o_k)=1);\ Init\mid\\ 
&;\\
Goals :=\  & return(Condition_o(s));\\
Condition_o(s) :=\ & (p(o_1,\ldots, o_k)==0) \&\&\ Condition_o(s)\mid \\
& (p(o_1,\ldots, o_k)==1) \&\&\ Condition_o(s)\mid \\
& true
\end{align*}
\end{footnotesize}
\noindent
The {\em init function} is a write-only {\em void function} that initializes the C++ maps with the assignments for representing the  ground atoms $p(o_1,\ldots, o_k)$, that hold in the initial state. The {\em goal function} is a read-only {\em Boolean function} that encodes, as a partial state, the subset of goal states. Figure~\ref{fig:problem} shows the {\em init} and {\em goal} C++ functions for representing the problem of unstacking the three-block tower of Figure~\ref{fig:tower}. 

\begin{figure}[h!]
\begin{scriptsize}
\begin{lstlisting}[language=c++,numbers=none]
void init() {
 pred_clear[{0}]=1; 
 pred_handempty[{}]=1;
 pred_on[{0,1}]=1; pred_on[{1,2}]=1;
 pred_ontable[{2}]=1;
}
\end{lstlisting}
\begin{lstlisting}[language=c++,numbers=none]
bool goals() {
  return ((pred_ontable[{0}]==1) && 
          (pred_ontable[{1}]==1) && 
          (pred_ontable[{2}]==1));
}
\end{lstlisting}
\end{scriptsize}
    \caption{\small The {\em init} and {\em goal} C++ functions representing the planning problem of unstacking the three-block tower of Figure~\ref{fig:tower}.}
    \label{fig:problem}
\end{figure}

\subsection{Sequential plans as C++ programs} Our C++ representation of a sequential plan $\pi$ uses programming instructions for: (i), invoking the {\em C++ Boolean function} that encodes an action scheme and (ii), incrementing/decrementing the value of an index. Formally:
\begin{footnotesize}
\begin{align*}
\pi :=\ & Statement(s)\\
Statement(s) := \ & a(z_1,\ldots,z_k);Statement(s)\mid \\
\ & z++;Statement(s) \mid\\
\ & z--;Statement(s) \mid \\
\ & ; 
\end{align*}
\end{footnotesize}
\noindent
where $a(z_1,\ldots,z_k)$ is an action scheme instantiated with a subset of indexes $\{z_1,\ldots,z_k\}\subseteq Z$, and \{{\tt\small z++}, {\tt\small z--} {\small $| \; z \in Z\}$} are the instructions to increment/decrement an index $z\in Z$. Indexes in $Z$ are always initialized to zero. Figure~\ref{fig:indexes} illustrates the relation between an action scheme (i), its corresponding action instantiated over indexes in $Z$ and (ii), the corresponding ground actions instantiated over the objects in $\Omega$. 

Figure~\ref{fig:seq-plan-blocks} illustrates our C++ representation of the four-action sequential plan $\pi=\langle${\tt\small unstack(b0,b1)}, {\tt\small putdown(b0)}, {\tt\small unstack(b1,b2)}, {\tt\small putdown(b1)}$\rangle$ for unstacking the three-block tower of Figure~\ref{fig:tower}. The {\tt\small void ONTABLE-SEQUENTIAL()} program leverages two indexes, $Z=\{z_1,z_2\}$, that are initialized to zero so they initially point to the first object (block $b_0$ in this case). After executing the first $z_2${\tt\small ++} instruction, $z_2$ points to the second block, $b_1$, while $z_1$ still points to block $b_0$. This means that the first ${\tt\small act\_unstack(z_1,z_2)}$ instruction of the program in Figure~\ref{fig:seq-plan-blocks} is actually executing the ground action ${\tt\small unstack(b_0,b_1)}$, which corresponds to the first step of plan $\pi$. Likewise, the first ${\tt\small act\_putdown(z_1)}$ program instruction executes the ground action ${\tt\small putdown(b_0)}$, i.e. the second step of the sequential plan $\pi$. The second ${\tt\small act\_unstack(z_1,z_2)}$ program instruction is  executing the ground action ${\tt\small unstack(b_1,b_2)}$, since both $z_1$ and $z_2$ are increased just before that instruction is executed. Finally, the second ${\tt\small act\_putdown(z1)}$ executes the ground action ${\tt\small putdown(b_1)}$, which is the fourth and last step of the sequential plan $\pi$.

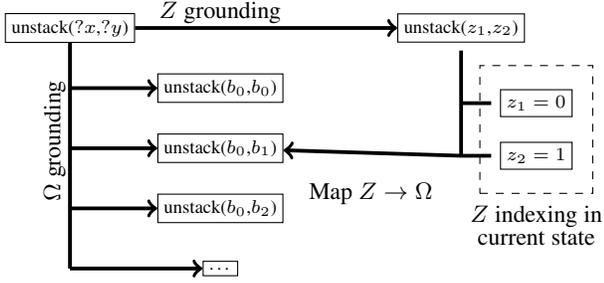
\begin{figure}[t]
    \centering
    \scriptsize
    \begin{tikzpicture}
    \node [draw] at (0, 0)  (a)    {unstack($?x$,$?y$)};
    \node [draw] at (5.2, 0) (b) {unstack($z_1$,$z_2$)};
    %\node [draw=none] at (0, -2.4) (c) {};
    \node [draw] at (2, -0.8) (d) {unstack($b_0$,$b_0$)};
    \node [draw] at (2, -1.6) (e) {unstack($b_0$,$b_1$)};
    \node [draw] at (2, -2.4) (f) {unstack($b_0$,$b_2$)};
    \node [draw] at (2, -3.2) (g) {\ldots};    
    
    \draw[->, line width=0.5mm] (a) -- (b) node at (2,0.2) {\small $Z$ grounding};
    \draw[dashed] (5.45,-2.2) rectangle (6.95,-0.5);
    \node [draw] at (6.2,-1.0) {$z_1 = 0$};
    \node [draw] at (6.2,-1.7) {$z_2 = 1$};
    \node at (6.2,-2.5) {\small $Z$ indexing in};
    \node at (6.2,-2.8) {\small  current state};
    
    \path[line width=0.5mm] (a) edge (0, -3.2);
    \draw[->, line width=0.5mm] (0, -0.8) -- (d);
    \draw[->, line width=0.5mm] (0, -1.6) -- (e);
    \draw[->, line width=0.5mm] (0, -2.4) -- (f);
    \draw[->, line width=0.5mm] (0, -3.2) -- (g);    
    \node [draw=none, rotate=90] at (-0.2,-1.5) {\small $\Omega$ grounding};    
    
    \path[line width=0.5mm] (5.2,-0.3) edge (5.2,-1.7);
    \path[line width=0.5mm] (5.2,-1.0) edge (5.6,-1.0);
    \path[line width=0.5mm] (5.2,-1.7) edge (5.6,-1.7);
    \draw[->,line width=0.5mm] (5.2,-1.7) -- (e);
    \node at (4,-2.2) {\small Map $Z\rightarrow\Omega$};
    
    \end{tikzpicture}
    \caption{\small Relation between the action scheme $unstack(?x,?y)$ (i), the action $unstack(z_1,z_2)$ instantiated with indexes $(z_1,z_2)$, and (ii), the ground actions instantiated with the set of three blocks $\Omega=\{b_0,b_1, b_2\}$. Indexes $z_1$ and $z_2$ are bound variables in $[0,\ldots,|\Omega|)$ and currently, they are indexing blocks $b_0$ and $b_1$, respectively.}
    \label{fig:indexes}
\end{figure}

\begin{figure}[h!]
\begin{center}
\begin{scriptsize}
\begin{lstlisting}[language=c,numbers=none]
void ONTABLE-SEQUENTIAL (){
    int z1=0, z2=0;    
    z2++;
    act_unstack(z1,z2);
    act_putdown(z1);
    z1++;
    z2++;
    act_unstack(z1,z2);
    act_putdown(z1);
}
\end{lstlisting}
\end{scriptsize}
\end{center}
    \caption{\small C++ representation of a {\em sequential plan} for unstacking and putting onto the table the three-block tower of Figure~\ref{fig:tower}.}
    \label{fig:seq-plan-blocks}
\end{figure}

\begin{theorem}
Provided a number of indexes $|Z|$ as large as the largest arity of an action, our C++ representation for sequential plans preserves the original solution space.
\label{thm:programs-sound} 
\end{theorem}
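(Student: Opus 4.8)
The plan is to establish the statement as a two–way correspondence between the \emph{original solution space} --- the set of all sequences of ground actions $\langle a_1,\ldots,a_m\rangle$ over the objects in $\Omega$ --- and the set of behaviours realizable by the C++ sequential plans generated by the grammar for $\pi$. Throughout I would identify the value $v\in[0,|\Omega|)$ held by an index $z$ with the object it addresses, so that an index configuration is a map $Z\to\Omega$, exactly as depicted in Figure~\ref{fig:indexes}. The proof then splits into a \emph{soundness} inclusion (every C++ sequential plan denotes a ground action sequence) and a \emph{completeness} inclusion (every ground action sequence is denoted by some C++ sequential plan when $|Z|$ is at least the largest arity).

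For soundness I would argue by structural induction on $Statement(s)$. The grammar admits only two kinds of statement: index manipulations (\texttt{z++}, \texttt{z--}), which modify the finite-domain index variables but leave the state maps untouched, and action calls $a(z_1,\ldots,z_k)$. At the moment an action call is executed, its indexes hold some values $v_1,\ldots,v_k\in[0,|\Omega|)$, and by construction of the action function from the action grammar (Figure~\ref{fig:unstack2}) the call realizes \emph{precisely} the \strips{} transition of the ground action $a(o_{v_1},\ldots,o_{v_k})$: applicable exactly when its preconditions hold, with the corresponding add/delete effects. Hence the projection of the execution onto its action calls is a well-formed member of the original solution space, and the induced $exec(\Pi,P_t)$ of Definition~\ref{def:gp-solution} has the same applicability and goal-reaching behaviour as that ground sequence.

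Completeness is the crux. Given an arbitrary ground plan $\pi=\langle a_1,\ldots,a_m\rangle$, I would synthesize a C++ program realizing it by carrying the current index configuration as an invariant and, before emitting the call for each $a_i=a(o_{j_1},\ldots,o_{j_{k_i}})$, inserting increment/decrement statements that drive each of the $k_i$ indexes from its current value to the required object index $j_l$. Since all indexes are initialized to $0$ and every target lies in $[0,|\Omega|)$, each index can be moved to any required value by a monotone run of \texttt{z++} or \texttt{z--} that never leaves the range. The hypothesis $|Z|\ge$ (largest action arity) is exactly what guarantees enough distinct indexes to address all $k_i$ arguments of $a_i$ simultaneously, even when those arguments must point to distinct objects; I would also remark that with strictly fewer indexes an action of maximal arity whose arguments are pairwise distinct could not be expressed, so the bound is tight.

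The step I expect to be the main obstacle is making the completeness construction rigorous at the level of index reachability: I must verify that from any configuration in the set of maps $Z\to\Omega$ the target configuration demanded by the next action is reachable by in-range increments and decrements, and that indexes being \emph{shared} and reused across successive actions never obstructs this (they do not, since each index is repositioned on demand and no index is consumed). Once this reachability lemma is in place, concatenating the per-step index-adjustment gadgets with the action calls yields a program whose action-call projection is exactly $\pi$, which together with the soundness inclusion closes the equivalence and proves that the solution space is preserved.
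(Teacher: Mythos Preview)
Your proposal is correct and, for the essential completeness direction, follows exactly the paper's argument: reposition the indexes with \texttt{z++}/\texttt{z--} before each action call so that the call grounds to the desired $a(o_{j_1},\ldots,o_{j_{k_i}})$, which is possible precisely because $|Z|$ meets the maximal arity. The paper's own proof is a one-sentence version of this construction and does not separately argue soundness or the index-reachability lemma; your explicit two-inclusion decomposition and the structural induction for soundness add rigor but no new ideas.
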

\begin{proof}
Any sequential plan $\pi$ can be rewritten as an equivalent C++ procedure that first initializes indexes in $Z$ to zero and then, for each ground action $a\in \pi$, it repeatedly applies {\small\tt z++/z--} instructions until the indexes address the  objects of the corresponding ground action $a$. 
\end{proof}

\subsection{Beyond propositional planning}
Our C++ representation naturally supports planning with numeric state variables, that can participate into the representation of a classical planning problem as defined in PDDL2.1~\cite{fox2003pddl2}. To support the representation of numeric state variables, our {\em C++ maps} store {\em integers} instead of Boolean. Likewise goals and action preconditions can include assertions over the numeric state variables, and action effects can include assignments of the numeric state variables. For instance, the set of numeric state variables that indicate the physical distance between two blocks is declared in our C++ fragment as {\tt\small map< vector<int>, int> distance}; with this regard, $distance[\{0,1\}]=7$ indicates that the distance between blocks $b_0$ and $b_1$ is of seven units,  and $distance[\{z1,z2\}]>distance[\{z2,z3\}]$ indicates that the distance between the blocks pointed by indexes $z1$ and $z2$ is larger than the distance between the blocks pointed by $z2$ and $z3$. Object {\em typing} is also naturally supported by our C++ representation specializing map indexes to the number of objects of a particular type.

\section{Generalized Planning with C++}
This section details our approach for computing generalized plans represented as C++ programs.

\subsection{Representing generalized plans with C++}
\label{subsec:GPlanC++}
Given that a GP problem is just a set of classical planning problems from a given domain, and that the C++ representation for classical planning  is detailed above,  we directly define here our C++ representation of generalized plans. Our C++ representation of a generalized plan $\Pi$ extends our representation of a sequential plan $\pi$ with two control-flow constructs ({\tt If} conditionals and {\tt For} loops):
\begin{footnotesize}
\begin{align*}
\Pi :=\  & ExtdStmnt(s)\\
ExtdStmnt(s) :=\ & If; ExtdStmnt(s)\mid For; ExtdStmnt(s)  \mid\\
& Statement(s) ; ExtdStmnt(s)\mid ;\\
If :=\ & if(Condition)\{ExtdStmnt(s)\}\\
Condition := \ & (p(z_1,\ldots, z_k)==0)\mid (p(z_1,\ldots, z_k)\neq 0)\mid\\ 
& (z_1>z_2)\mid (z_1==z_2)\mid (z_1<z_2)\mid\\ 
& (p(z_1,\ldots, z_k)>p(z_1',\ldots, z_k'))\mid\\
& (p(z_1,\ldots, z_k)==p(z_1',\ldots, z_k'))\mid\\
& (p(z_1,\ldots, z_k)<p(z_1',\ldots, z_k'))\mid\\
For :=\ & for(z=0;z<|\Omega|;z++)\{ExtdStmnt(s)\}|\\
& for(z=|\Omega|-1;z\geq 0;z--)\{ExtdStmnt(s)\}
\end{align*}
\label{eq:GPgrammar}
\end{footnotesize}
\noindent
where {\tt\small ExtdStmnt(s)} extends {\tt\small Statement(s)}, as defined for sequential plans, with {\tt If} conditional and {\tt For} loop instructions. The {\tt\small Condition} of an {\tt If} instruction is restricted to: (i), checking whether a $p(z_1,\ldots, z_k)$ predicate instantiated with indexes in $Z$ equals to zero, (ii), the three {\em three-way comparison}~\cite{browning2020working} of two different indexes in $Z$ and (iii), the three-way comparison of two  predicates instantiated with indexes in $Z$ (or two numeric fluents in the case of a numeric domain). Last, we restrict {\tt For} loops to exclusively iterate over the domain $[0,|\Omega|)$ of an index $z\in Z$. Like in our C++ representation for sequential plans, in a C++ generalized plan the set of indexes $Z$ are always initialized to zero.

Figure~\ref{fig:GP-blocks} shows an example of a generalized plan, represented as a C++ program and computed by {\sc bfgp++}, for unstacking any number of towers from the blocksworld, no matter the actual number of blocks $|\Omega|$. Please note that object ordering affects to the  sequential plan produced by the execution of the generalized plan but it does not affect the correctness/completeness of the generalized plan\footnote{This also occurs with regular programs e.g. a {\em SelectionSort} program is sound and complete but the number of {\tt swap} instructions executed by the program depends on the input list to be sorted.}. As a matter of fact, the program of Figure~\ref{fig:GP-blocks} leverages three indexes $Z=\{z_1,z_2,z_3\}$ to be robust to any  block ordering. 

\begin{figure}[t]
\begin{scriptsize}
\begin{lstlisting}[language=c++,numbers=none,mathescape]
void ONTABLE (){           
 int z1=0, z2=0, z3=0;     
 for(z1=0; z1<$|\Omega|$; z1++){     
  for(z2=0; z2<$|\Omega|$; z2++){     
   for(z3=0; z3<$|\Omega|$; z3++){     
    act_put_down(z2);          
    act_unstack(z2,z3);       
   }}}}
\end{lstlisting}
\end{scriptsize}
    \caption{\small Generalized plan, represented as a C++ program and computed by {\sc bfgp++}, for unstacking any number of towers from the blocksworld, no matter the number of blocks $|\Omega|$.}
    \label{fig:GP-blocks}
\end{figure}

\subsubsection{From C++ programs to sequential plans}
Given a C++ generalized plan $\Pi$, and a classical planning problem $P$, the sequential plan $exec(\Pi,P)$ is built executing $\Pi$ on $P$. This is a deterministic search-free procedure where no instantiation is required (there are no {\em free variables}). The program execution may however produce actions whose execution does not modify the planning state, i.e. the current state does not meet the precondition of those actions, so action effects are not applied. These actions are automatically discarded since identifying them is straightforward; the execution of the corresponding C++ function $a(z_1,\ldots,z_k)$ returns {\tt false}. 

We illustrate the building of a sequential plan from a C++ generalized plan with the execution of the program of Figure~\ref{fig:GP-blocks} on the initial state of Figure~\ref{fig:tower}. This execution produces the following sequence of ground actions repeated thrice: $3\times\langle${\small putdown(b0), unstack(b0,b0), putdown(b0),} {\footnotesize\bf unstack(b0,b1)}, {\footnotesize\bf putdown(b0)}, {\small unstack(b0,b2),} {\footnotesize\bf putdown(b1)}, {\small unstack(b1,b0), putdown(b1), unstack(b1,b1), putdown(b1),} {\footnotesize\bf unstack(b1,b2)}, {\small putdown(b2), unstack(b2,b0), putdown(b2), unstack(b2,b1), putdown(b2), unstack(b2,b2)}$\rangle$. Only the four actions in bold update the state, i.e. they are applicable when they are executed; in the first repetition {\small unstack(b0,b1)}, {\small putdown(b0)} and {\small unstack(b1,b2)}, are applicable; while {\small putdown(b1)}, becomes only applicable in the second out of the three repetitions. The sequence of these four ground actions actually is the sequential plan for unstacking the three-block tower of Figure~\ref{fig:tower}. 

\subsubsection{Termination and complexity of generalized plans}
Enabling C++ programs with {\tt For} loops introduces a new possible source of failure of a program execution in a given classical planning instance; the program execution could enter into an {\em infinite loop}. To formally guarantee the termination of our C++ generalized plans, we restrict ourselves to C++ programs s.t. {\em ExtdStmnt(s)} in the body of a {\tt For} loop do not include instructions that modify the index iterated by that  loop~\cite{hoare1969axiomatic}.

\begin{theorem} 
A generalized plan $\Pi$, represented in our C++ fragment is always terminating.
\label{thm:programs-terminating} 
\end{theorem}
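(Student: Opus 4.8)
The plan is to proceed by structural induction on the grammar generating a C++ generalized plan, i.e. on the derivation of the top-level \emph{ExtdStmnt}. The crucial remark is that every construct of the fragment \emph{except} the {\tt For} loop is manifestly terminating: an action call $a(z_1,\ldots,z_k)$ runs a fixed loop-free body (a bounded conjunction of tests followed by a bounded block of assignments and a {\tt return}); the index updates {\tt z++}/{\tt z--} and the empty statement {\tt ;} are single steps; an {\tt If} evaluates one bounded {\tt Condition} and executes its body at most once; and a sequential composition $S_1;S_2$ of terminating statements terminates. Thus all base cases and the sequencing and {\tt If} cases of the induction follow immediately from the induction hypothesis, and the whole question reduces to bounding the iterations of each {\tt For} loop.

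The crux is therefore the {\tt For} case; take {\tt for(z=0; z<}$|\Omega|${\tt; z++)}$\{B\}$, the decreasing variant being symmetric. The induction hypothesis gives that the body $B$ terminates \emph{from every state}, which is exactly what is needed since $B$ is executed on a possibly different state on each pass. By the syntactic restriction recalled just before the statement, $B$ contains no instruction that writes the controlling index $z$; hence every completed pass leaves $z$ as the header set it, and the header then increments $z$ by one. Since $|\Omega|$ is a fixed finite constant---the number of world objects of the instance---and $z$ ranges over the finite domain $D_z=[0,|\Omega|)$, the guard $z<|\Omega|$ is falsified after exactly $|\Omega|$ passes. The loop is then an $|\Omega|$-fold sequential composition of terminating executions of $B$, and so terminates.

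The step I expect to need the most care is making this index-invariance restriction watertight under nesting. I would read ``\emph{ExtdStmnt} in the body of a {\tt For} loop does not modify the index iterated by that loop'' as also forbidding an inner {\tt For} over the \emph{same} index $z$ as an enclosing loop, on the grounds that the inner loop's initialization ({\tt z=0} or {\tt z=}$|\Omega|${\tt -1}) is itself a write to $z$. Under this reading, nested loops either use distinct indices (as in Figure~\ref{fig:GP-blocks}) or the inner reset is a prohibited modification of the outer counter; either way the per-level bound of $|\Omega|$ is preserved and the structural induction closes. It then only remains to note that the action functions are the sole subroutine calls in the fragment and are straight-line and non-recursive, so no hidden unbounded behaviour can be reintroduced---immediate from their grammar.
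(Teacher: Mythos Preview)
Your proof is correct and follows essentially the same line as the paper's: isolate the {\tt For} construct as the only potential source of non-termination, then use the syntactic restriction that the body cannot modify the controlling index together with the finiteness of $[0,|\Omega|)$ to bound the number of iterations. Your version is simply a more careful structural-induction rendering of the paper's informal argument, and your explicit remark that an inner {\tt For} over the same index is itself a write to that index (hence forbidden) makes precise a point the paper leaves implicit.
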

\begin{proof}
The grammar used for the production of a C++ generalized plan $\Pi$ comprises sequential statements, {\tt If} conditionals, and {\tt For} loops that iterate over the finite  domain $[0,|\Omega|)$ of an index. Since these rules guarantee the program is well structured, and given that sequential statements and {\tt If} conditionals can only advance  the program execution, then {\tt For} loops would be the only possible cause of non-termination. Since we do not allow to programming instructions inside a {\tt For} loop that modify the index iterated by the loop, we have that (nested) {\tt For} loops always modify indexes in a unique direction, which is terminating because the domain of an index is finite by definition. 
\end{proof}

Besides providing a formal termination proof for C++ generalized plans, restricting loops to iterations over the domain $[0,|\Omega|)$ of an index also reveals the {\em asymptotic complexity} of generalized plans w.r.t.~the number of objects. We leverage the {\em big-O notation} to formulate an upper-bound on the complexity of generalized plans and, as a consequence, on the difficulty of the problems solved by those plans~\cite{skiena1998algorithm}. If there is a C++ generalized plan $\Pi$ that solves a given classical planning problem $P$, it means that the length of the sequential plan produced by the execution of $\Pi$ on $P$ is upper-bounded by a polynomial of the number of objects $|\Omega|$; the degree of that polynomial is given by the maximum number of nested {\em for loops} in the C++ program. As $|\Omega|$ grows larger this term will come to dominate, so that all other terms, and the coefficients, can be ignored.

\begin{definition}[Asymptotic complexity of a generalized plan]
\label{def:complexity}
The asymptotic complexity of a generalized plan represented by a C++ program is defined as the joint product of the ranges of the indexes iterated by the largest set of nested loops.
\end{definition}

For instance the generalized plan of Figure~\ref{fig:GP-blocks}, with three nested loops, and where each loop range is $[0,|\Omega|)$, has asymptotic complexity $O(|\Omega|^3)$. This is a worst-case upper-bound; the program of Figure~\ref{fig:GP-blocks} exhibits worst-case complexity $O(|\Omega|^3)$ but we showed that the produced sequential plan for the three-block instance of Figure~\ref{fig:tower} contained four actions.

\subsection{Synthesis of C++ programs as heuristic search}
{\sc bfgp++} is our approach to the synthesis of generalized plans, represented as C++ programs. {\sc bfgp++} implements a Best-First Search (BFS) in the space of  possible programs that can be built with the grammar of Section~\ref{subsec:GPlanC++}. Since this  search space is  unbound, we bound it with two input parameters: a maximum number program lines $n$, and a maximum number of {\em indexes} $|Z|$ that can be used by a program. Next, we provide more details on the implementation of {\sc bfgp++}.

\subsubsection{The search space} Each node of the {\sc bfgp++} search space corresponds to a {\em partially specified program}. By partially specified program we mean that some of its $n$ program lines may be undefined, because they are not programmed yet. Starting from the {\em empty program} (i.e. the partially specified program whose $n$ lines are all undefined), we  enumerate the space of  possible programs with a search operator that programs a possible programming instruction (according to the grammar of Section~\ref{subsec:GPlanC++}), at an undefined program line $0\leq i< n$. This search operator is only applicable when program line $i$ is undefined. Initially $i:=0$, and after line $i$ is programmed $i:=i+1$. For instance, we can build the generalized plan of Figure~\ref{fig:GP-blocks} with the index set $Z=\{z_1,z_2,z_3\}$, starting from the empty program, and following the successive application of the following six grammar rules (indexes in $Z$ are always initialized to zero): 
\begin{scriptsize}
\begin{enumerate}
    \item $for(z1=0; z1<|\Omega|; z1{\small ++})\{ExtdStmnt(s)\}$
    \item $for(z2=0; z2<|\Omega|; z2{\small ++})\{ExtdStmnt(s)\}$
    \item $for(z3=0; z3<|\Omega|; z3{\small ++})\{ExtdStmnt(s)\}$
    \item $act\_putdown(z2); ExtdStmnt(s)$
    \item $act\_unstack(z2,z3); ExtdStmnt(s)$
    \item $;$
\end{enumerate}
\end{scriptsize}

\subsubsection{Pruning rules} To keep the search space tractable, {\sc bfgp++} implements the following pruning rules that reduce the search space but preserve the solution space:
\begin{itemize}
    \item We do not allow  programming a {\em conditional if} or a {\em for loop} at the last program line; it is meaningless to program control-flow structures with empty body. 
    \item We do not allow  programming {\em for loops} that iterate over an object type that contains, at most, a single object for all the instances of the GP problem.  
\end{itemize}

In addition, we implement a simple but effective symmetry breaking mechanism to safely prune the programming of  {\em for loops} that  correspond to {\em symmetries} (permutations of the order of the {\em for loops}) of already expanded programs.  We use a {\em tabu list}, that is initially empty and, every time a node is expanded, we store an abstraction of its programmed {\em for loops}; the remaining program instructions, and the precise identity of the indexes iterated by the {\em for loops} are ignored. For instance, the generalized plan of Figure~\ref{fig:GP-blocks} is abstracted by \{(block,++,1,8), (block,++,2,7), (block,++,3,6)\}, since this program contains three {\em for loops} and where, {\tt block} indicates the index type, {\tt ++} indicates the kind of the {\em for loop} ({\tt\small ++}increasing/{\tt\small--}decreasing), and each pair of numbers indicates the first and  last program lines of each {\em for loop}.

\subsubsection{The evaluation functions} {\sc bfgp++} implements a {\em Best-First Search} (BFS) in the previous solution space. To reduce memory requirements, we store only the open list of generated nodes but not the closed list of expanded nodes~\cite{korf2005frontier}. We consider the following evaluation functions for sorting the open list:
        \begin{itemize}
            \item $f_{euclidean}(\Pi,\mathcal{P}) = \sum_{P_t\in\mathcal{P}}\sum_{v\in G_t} (s_t[v]-G_t[v])^2$. This function accumulates, for each classical planning problem $P_t\in\mathcal{P}$ in a GP problem, the {\em euclidean distance} of  state $s_t$ to the goal state variables $G_t$. The state $s_t$ is obtained applying the sequence of actions  $exec(\Pi,P_t)$ to the initial state $I_t$ of that problem $P_t\in\mathcal{P}$.
            \item $f_{min(\#loops)}(\Pi)$ is the number of loop instructions in $\Pi$.    
        \end{itemize}
\noindent

Both $f_{euclidean}$ and $f_{min(\#loops)}$  were proposed in \citeauthor{javi:GP:ICAPS21}~(\citeyear{javi:GP:ICAPS21}), named $h_5$ and $f_1$ respectively. In more detail, the configuration with the best reported performance was $(f_{euclidean},f_{min(\#loops)})$, i.e. sorting the open list with $f_{euclidean}$, and breaking ties with the $f_{min(\#loops)}$ function. In this paper we introduce a third alternative function $f_{max(\#loops)}(\Pi)=-f_{min(\#loops)}(\Pi)$, which aims maximizing the number of loops in a program. We discovered that prioritizing by $f_{euclidean}$ and breaking ties with our new function $f_{max(\#loops)}$, instead of $f_{min(\#loops)}$, performs much better at a wide range of challenging GP problems. Section~\ref{sec:evaluation} provides  details on the obtained results. 

\subsubsection{Properties of {\sc bfgp++}}
Our {\sc bfgp++} algorithm for the synthesis of C++ generalized plans is terminating; termination follows from a terminating searching algorithm and evaluation functions. Regarding the former, {\sc bfgp++} implements a {\em frontier BFS} which is known to be terminating at finite search spaces~\cite{korf2005frontier}, we recall that the search space of {\sc bfgp++} is finite since $n$ and $|Z|$ are bounded. Regarding the evaluation functions, $f_{min(\#loops)}$ and $f_{max(\#loops)}$, they terminate in $n$ steps, where $n$ is the number of lines of the program to evaluate, and $f_{euclidean}$ terminates iff the program executions terminate, which immediately follows from Theorem~\ref{thm:programs-terminating}. Further, {\sc bfgp++} is sound, since it only outputs a program when it is able to solve all the planning instances given as input in the GP problem (Definition~\ref{def:gp-solution}). Last, {\sc bfgp++} is complete provided that there exists a GP solution within the given number of program lines $n$ and indexes $|Z|$.

\begin{table*}[t]
\scriptsize    
\centering
    \begin{tabular}{|l|c||c|c|c|c||c|c|c|c|} \hline
         \multirow{2}{*}{} & \multirow{2}{*}{$n, |Z|$} & \multicolumn{4}{c||}{GP as heuristic search (\citeyear{javi:GP:ICAPS21})} & \multicolumn{4}{c|}{\sc bfgp++} \\\cline{3-10}
         & & Time & Mem. & Exp. & Eval.  & Time & Mem. & Exp. & Eval.  \\\hline
         Blocks (ontable) & 9, 3 & TE & TE & TE & TE & {\bf 0.08} & {\bf 5} & {\bf 9} & {\bf 347} \\         
         Corridor & 11, 2 & TE & TE & TE & TE & {\bf 701.90} & {\bf 27} & {\bf 661.4K} & {\bf 2.5M} \\
         Fibonacci & 7, 2 & 32.05 & 83 & 242.1K & 1.7M & {\bf 1.61} & {\bf 5} & {\bf 2.4K} & {\bf 19.1K} \\
         Find & 6, 3 & 0.98 & 7 & 13.9K & 38.4K & {\bf 0.14} & {\bf 5} & {\bf 1.3K} & {\bf 3.9K} \\
         Floyd & 8, 3 & TE & TE & TE & TE & {\bf 0.22} & {\bf 5} & {\bf 4} & {\bf 138} \\
         Gripper & 8, 4 & TE & TE & TE & TE & {\bf 105.00} & {\bf 206} & {\bf 83.2K} & {\bf 1.0M} \\
         Intrusion & 9, 1 & TE & TE & TE & TE & {\bf 521.24} & {\bf 874} & {\bf 411.8K} & {\bf 3.4M} \\
         Reverse & 7, 2 & 9.75 & 31 & 99.8K & 344.7K & {\bf 0.23} & {\bf 4} & {\bf 626} & {\bf 2.8K} \\
         Select & 7, 2 & 9.52 & 32 & 96.3K & 331.6K & {\bf 0.28} & {\bf 4} & {\bf 737} & {\bf 4.4K} \\
         Sorting & 8, 2 & 129.72 & 335 & 1.2M & 4.1M & {\bf 0.02} & {\bf 4} & {\bf 52} & {\bf 245} \\
         Spanner & 12, 5 & TE & TE & TE & TE & {\bf 0.91} & {\bf 5} & {\bf 14} & {\bf 367} \\
         Triangular Sum & 5, 2 & 0.15 & 5 & 1.4K & 9.9K & {\bf 0.01} & {\bf 4} & {\bf 9} & {\bf 96} \\
         Visitall & 15, 4 & TE & TE & TE & TE & {\bf 1.67} & {\bf 6} & {\bf 117} & {\bf 2.7K} \\\hline
    \end{tabular}
    \caption{\small Number $n$ of program lines and $|Z|$ indexes. For each synthesis configuration we report CPU time (secs), memory peak (MBs), and number of expanded and evaluated nodes. Best results in bold. TE stands for Time-Exceeded ($>$1h of CPU). }
    \label{tab:synthesis}
\end{table*}

\section{Evaluation}
\label{sec:evaluation}
Next we report results on the synthesis (and validation) of C++ generalized plans with {\sc bfgp++}, and compared them w.r.t. the {\em GP as heuristic search} approach~\cite{javi:GP:ICAPS21,segovia:GP:ijcai22}, the state-of-the-art for the computation of generalized plans represented as programs. All experiments  are performed in an Ubuntu 20.04 LTS, with AMD® Ryzen 7 3700x 8-core processor $\times$ 16 and 32GB of RAM, with a 1 hour time bound. We also report the computed solutions, and their asymptotic complexity w.r.t the number of world objects. 

We address the full set of domains from \citeauthor{javi:GP:ICAPS21}~(\citeyear{javi:GP:ICAPS21}), all numeric domains, and we also address several \strips\ domains (marked with a {\bf *}). Next we briefly describe the domains: {\bf Blocks* (ontable)}, put all blocks onto the table from any number of towers and blocks setting. {\bf Corridor*}, move from any initial location in a corridor to any other arbitrary location. {\bf Fibonacci}, given the first numbers of the Fibonacci sequence, compute the $n^{th}$ value. {\bf Find}, count the number of occurrences of a specific value in a vector. {\bf Floyd*}, given an input graph, compute a path that connects two distant nodes. {\bf Gripper*}, move all balls from room A to the adjacent room B. {\bf Intrusion*}, steal data from a set of hosts. {\bf Reverse}, reverse the content of a given vector. {\bf Select}, search for the minimum value in a vector. {\bf Sorting}, sort in increasing order all the values in a vector. {\bf Spanner*}, collect the spanners in a directed corridor to tighten all loose nuts. {\bf Triangular Sum}, given the first numbers of the triangular sum, compute the $n^{th}$ value. {\bf Visitall*}, starting at the bottom left corner of a squared grid visit all locations in the grid.

\subsubsection{Synthesis of C++ generalized plans}
For the synthesis experiments, we use ten random instances of increasing difficulty per domain and compare {\sc bfgp++}, that leverages the function combination $(f_{euclidean},f_{max(\#loops)})$, with the best configuration of the {\em GP as heuristic search} approach~\cite{javi:GP:ICAPS21,segovia:GP:ijcai22}, that leverages the evaluation functions $(f_{euclidean},f_{min(\#loops)})$ and serves as a baseline. 

Table~\ref{tab:synthesis} summarizes the obtained results, when computing the best solutions found in terms of number of required program lines and pointers, and it shows that {\sc bfgp++} outperforms the baseline over all the domains. One of the main performance issues in the synthesis of planning programs is to guarantee that a given (partially specified) program is terminating~\cite{segovia:GP:AAAI2020}; in \citeauthor{javi:GP:ICAPS21}~(\citeyear{javi:GP:ICAPS21}), search nodes corresponding to infinite programs were checked and discarded in execution time, which was costly and had no guarantees beyond the given set of input instances. Alternatively, the candidate C++ generalized plans considered by {\sc bfgp++} are, by definition, terminating for any input instance. Therefore {\sc bfgp++} skips the costly check of infinite programs. 

Note that the baseline fails to solve  the \strips\ domains within the given computation bounds. In \strips\ domains, $f_{euclidean}$ is a simple {\em goal counting} heuristic. The function combination of {\sc bfgp++} $f_{euclidean},f_{max(\#loops)}$  breaks ties prioritizing C++ programs of higher asymptotic complexity. This means that, if a C++ generalized plan does not exist within $n-1$ lines, but it does exist for $n$ lines, it will probably contain as many nested loops as possible. This is not however a {\em rule of thumb}, as shown by the {\em Corridor} domain, which is the domain that took the largest time to be solved; requiring two consecutive (not nested) {\tt For} loops, which are hard to identify by our current evaluation functions.

\subsubsection{Validation of C++ generalized plans}
The generalized plans synthesized by {\sc bfgp++} are compilable with the {\em GCC g++} compiler, and they are all successfully validated on twenty large random instances of increasing size. Table~\ref{tab:validation_experiments} reports the size of the largest
instance, and the average and total validation times (including compilation time).  This experiment shows that representing generalized plans as C++ programs is a scalable approach to deal with large classical planning instances, of thousands of objects, where off-the-shelf classical planners get stuck even in the pre-processing. 

Next we report the computed programs and their asymptotic complexity. Please note that the reported validation times correlate to the revealed program complexity.

{\bf Floyd.} Indexes  increase up to $N=|\Omega|$, which denotes the number of graph nodes, of the input instances. The asymptotic complexity of the generalized plan is $O(N^3)$.

\begin{scriptsize}
\begin{lstlisting}[language=c++,numbers=none,mathescape]
void FLOYD(){
 int z1=0, z2=0, z3=0;
 for(z1=0; z1<N; z1++){
  for(z2=0; z2<N; z2++){
   for(z3=0; z3<N; z3++){
    act_connect(z1,z2,z3);
}}}}
\end{lstlisting}
\end{scriptsize}

{\bf Corridor.}  This generalized plan requires the composition of two {\em for loops}, the first one to move to the rightmost location, and a second one to move back until reaching the goal location. In the worst case it iterates twice over the set of locations $N$, and its asymptotic complexity is $O(N)$.

\begin{scriptsize}
\begin{lstlisting}[language=c++,numbers=none,mathescape]    
void CORRIDOR(){
 int l1 = 0, l2 = 0; 
 for(l1=0; l1<N; l1++){
   act_move(l2,l1);
   l2 = l1;
 }
 for(l1=N-1; l1>=0; l1--){
   if(pred_goal_at[{l1}] == 0 ){
     if (l2>0) l2--;
   }
   act_move(l1,l2);
 }} 
\end{lstlisting} 
\end{scriptsize}

{\bf Fibonacci and Triangular Sum.} Both are numeric domains, where an input vector is given with the first elements of the sequence to compute. They compute every $a$-th value iteratively until reaching the last $N$ value. Their asymptotic complexity is $O(N)$.

\begin{scriptsize}
\begin{lstlisting}[language=c++,numbers=none,mathescape]    
void FIBONACCI(){      void TRIANGULARSUM(){
 int a=0, b=0;           int a=0, b=0;
 for(a=0; a<N; a++){     for(a=0; a<N; a++){
  act_vector_add(a,b);    act_vector_add(a,b);
  if (b>0) b--;           b = a;
  act_vector_add(a,b);   }}
  b = a;}}
\end{lstlisting}
\end{scriptsize}

{\bf Find, Reverse, Select and Sorting.} Four numeric domains for vector manipulation. $N$ is the size of the input vector, the first three domains run in $O(N)$. {\em Sorting} has two nested loops so its complexity is $O(N^2)$.
\begin{scriptsize}
\hspace{-5cm}
\begin{lstlisting}[language=c++,numbers=none,mathescape]        
void FIND(){             void SELECT(){
 int i=0, t=0, a=0;       int a=0, b=0;
 for(i=0; i<N; i++){      for(a=0; a<N; a++){
  if(p_v[{i}]==p_v[{t}]){  if(p_v[{a}]<p_v[{b}]){
     act_accumulate(a);     b=a; }}
  }}}                     act_select(b);}

void REVERSE(){          void SORTING(){
 int i=0, j=0;            int i=0, j=0;
 for(i=N-1; i>=0; i--){   for(i=0; i<N; i++){
  if(i>j){                 for(j=0; j<N; j++){
   swap(p_v[{i}],p_v[{j}]); if(p_v[{i}]<p_v[{j}]){
  }                          swap(p_v[{i}],p_v[{j}]);}}}}
  if(j<N-1) j++; }}           
                                
\end{lstlisting}
\end{scriptsize}

{\bf Gripper and Intrusion.} These two domains have generalized plans where a sequence of planning actions is repeated over the finite set of objects: the number of balls $NB$ for {\em Gripper}, and the number of hosts $NH$ for {\em Intrusion}. Their complexity is $O(NB)$ and $O(NH)$, respectively.

\begin{scriptsize}
\begin{lstlisting}[language=c++,numbers=none,mathescape]   
void GRIPPER(){            void INTRUSION(){
 int r1=0, r2=0, b1=0, g1=0; int h1=0;
 for(b1=0; b1<NB; b1++){     for(h1=0; h1<NH; h1++){
   act_pick(b1,r1,g1);        act_recon(h1);
   if (r2<NROOMS-1) r2++;     act_break_into(h1);
   act_move(r1,r2);           act_clean(h1);
   act_drop(b1,r2,g1);        act_gain_root(h1);
   act_move(r2,r1); }}        act_download_files(h1);
                              act_steal_data(h1);  }}
\end{lstlisting}
\end{scriptsize}

{\bf Spanner.} The complexity of this generalized plan is  $O(NLOC^2 \cdot NNUTS \cdot NSPAN)$, where $NLOC$ is the number of  locations, $NNUTS$ is the number of nuts, and $NSPAN$ is the number of spanners.

\begin{scriptsize}
\begin{lstlisting}[language=c++,numbers=none,mathescape]    
void SPANNER(){
 int l1=0, l2=0, m1=0, n1=0, s1=0; 
 for(l1=0; l1<NLOC; l1++){
  for(l2=0; l2<NLOC; l2++){
   for(n1=0; n1<NNUTS; n1++){
    for(s1=0; s1<NSPAN; s1++){
     act_pickup_spanner(l1,s1,m1);
     act_tighten_nut(l1,s1,m1,n1);
     act_walk(l1,l2,m1); }}}}
\end{lstlisting}
\end{scriptsize}

{\bf Visitall.} The complexity of this generalized plan is $O(NROWS^2 \cdot NCOLS^2)$, where $NROWS$ is the number of grid rows and $NCOLS$ the number of grid columns. 

\begin{scriptsize}
\begin{lstlisting}[language=c++,numbers=none,mathescape]    
void VISITALL(){
 int r1=0, r2=0, y1=0, y2=0; 
 for(r1=0; r1<NROWS; r1++){
  for(r2=0; r2<NROWS; r2++){
   for(c1=0; c1<NCOLS; c1++){
    for(c2=NCOLS-1; c2>=0; c2--){
     act_right(c1,c2);
     act_visit(r1,c2);
     if(p_at_row[{r2}]==0){
      act_up(r1,r2);
      act_left(c2,c1); }}}}}}
\end{lstlisting}
\end{scriptsize}

\begin{table}[]
    \centering
    \begin{scriptsize}
    \begin{tabular}{|l||c|c|c|} \hline
         {\bf } & {\bf Max. input} & {\bf Avg. instance time (s)} & {\bf Total time (s)  }\\\hline 
         Blocks (ont.) & 1,000 blocks & 33.592$\pm$37.888 & 673.427 \\
         Corridor & 5,001 locs. & 0.010$\pm$ 0.005 & 1.495 \\
         Fibonacci & 44th num. & 0.001$\pm$0.000 & 1.285 \\
         Find & 5,001 elems. & 0.004$\pm$0.002 & 1.352 \\
         Floyd & 872 vertices & 30.502$\pm$38.561 & 611.308 \\
         Gripper & 5,001 balls &  0.011$\pm$ 0.005 & 1.702 \\
         Intrusion & 1,001 hosts & 0.003$\pm$0.001 & 1.568\\
         Reverse & 5,001 elems. & 0.007$\pm$0.003 & 1.416 \\
         Select & 5,001 elems. & 0.010$\pm$0.004 & 1.485 \\
         Sorting & 5,001 elems. & 1.337$\pm$1.200 & 28.029  \\
         Spanner & 212 spanners & 20.609$\pm$26.363 & 413.798 \\
         T. Sum & 5,001st num. &  0.008$\pm$0.004 & 1.458 \\
         Visitall & $100\times100$ grid & 6.738$\pm$7.474 & 136.325 \\\hline 
    \end{tabular}
\end{scriptsize}    
    \caption{\small Size of the largest instance, avg. instance time with standard deviation and total validation time including compilation time (both in seconds). }
    \label{tab:validation_experiments}
\end{table}

\section{Conclusions} 
We presented a novel C++ representation for GP problems, and their solutions. We also introduced {\sc bfgp++} that implements a heuristic search in the space of candidate C++ generalized plans, which naturally models \strips\ domains, and that outperforms the previous {\em GP as heuristic search} approach~\cite{javi:GP:ICAPS21,segovia:GP:ijcai22}; since our C++ programs are terminating by definition, {\sc bfgp++} can skip the costly check of infinite programs. In addition we showed that our C++ generalized plans are compilable with standard programming tools (GCC g++) and that they are successfully validated in large instances, with several thousands of objects, where off-the-shelf classical planners get stuck in the pre-processing. Our {\em cost-to-go heuristic} is less informed than current heuristics for classical planning; our research agenda is to obtain better estimates building on top of modern  heuristics for classical planning~\cite{segovia2022landmarks}

Our C++ representation for GP can be viewed as an instantiation of {\em F-\strips{}}~\cite{geffner2000functional}; we show that the single level of indirection of indexes  over objects  is enough to represent GP problems, and their solutions, with constant memory access; in {\em F-\strips{}} function symbols can be recursively nested, so the actual complexity of state queries in {\em F-\strips{}} depends on the depth of this nesting. Prior work characterized the complexity of planning tasks following a related but different approach; analysing the behaviour of {\em state-space heuristics} for classical planning~\cite{hoffmann2005ignoring,helmert2008understanding,seipp2016correlation}. When generalized plans are represented as {\em generalized policies} their complexity has also been  characterized  w.r.t. the features in the policy rules~\cite{bonet2021general}.

%% The file named.bst is a bibliography style file for BibTeX 0.99c
\bibliographystyle{named}
\bibliography{ijcai22}

\end{document}